\documentclass{jps-cp}
\usepackage{algorithm}
\usepackage{algorithmic}
\usepackage{wrapfig}
\usepackage{graphicx}
\usepackage{mathtools} 
\usepackage{amssymb}
\usepackage{amsmath}
\usepackage{amsthm} 
\usepackage{color}
\usepackage{bm}

\newtheorem{theorem}{Theorem}

\newtheorem{definition}{Definition}
\usepackage{tcolorbox}
\usepackage{url}

\DeclareMathOperator{\diag}{diag}
\DeclareMathOperator{\trace}{Tr}

\DeclareMathOperator{\MLP}{MLP}

\DeclareMathOperator{\Linear}{Linear}

\DeclareMathOperator{\ReLU}{ReLU}

\newcommand{\bmb}{\bm{b}}

\newcommand{\bmg}{\bm{g}}
\newcommand{\bmh}{\bm{h}}

\newcommand{\bmm}{\bm{m}}

\newcommand{\bmv}{\bm{v}}
\newcommand{\bmw}{\bm{w}}
\newcommand{\bmx}{\bm{x}}

\newcommand{\bmz}{\bm{z}}

\newcommand{\rmc}{\mathrm{c}}
\newcommand{\rmd}{\mathrm{d}}

\newcommand{\bmzero}{\bm{0}}

\newcommand{\bmtheta}{\bm{\theta}}

\newcommand{\bmmu}{\bm{\mu}}

\newcommand{\bmpi}{\bm{\pi}}

\newcommand{\bmsigma}{\bm{\sigma}}

\newcommand{\bmphi}{\bm{\phi}}

\newcommand{\sfA}{\mathsf{A}}

\newcommand{\sfC}{\mathsf{C}}

\newcommand{\sfH}{\mathsf{H}}
\newcommand{\sfI}{\mathsf{I}}

\newcommand{\sfQ}{\mathsf{Q}}

\newcommand{\sfW}{\mathsf{W}}

\newcommand{\sfZ}{\mathsf{Z}}

\newcommand{\sfLambda}{\mathsf{\Lambda}}

\newcommand{\sfSigma}{\mathsf{\Sigma}}

\newcommand{\calD}{\mathcal{D}}

\newcommand{\calN}{\mathcal{N}}

%
%


\usepackage{txfonts} 

\title{Decentralized Collaborative Learning Framework with External Privacy Leakage Analysis}

\author{
Tsuyoshi \textsc{Id\'e}$^1$, Dzung T. \textsc{Phan}$^1$, and Rudy \textsc{Raymond}$^2$ 
}

\inst{
$^1$IBM Thomas J. Watson Research Center. \\
$^2$ Global Technology and Applied Research, J.P. Morgan Chase \& Co.
}

\email{\{tide,phandu\}@us.ibm.com,  raymond.putra@jpmchase.com} 

\recdate{February 4, 2024}

\abst{
This paper presents two methodological advancements in decentralized multi-task learning under privacy constraints, aiming to pave the way for future developments in next-generation Blockchain platforms. First, we expand the existing framework for collaborative dictionary learning (CollabDict), which has previously been limited to Gaussian mixture models, by incorporating deep variational autoencoders (VAEs) into the framework, with a particular focus on anomaly detection. We demonstrate that the VAE-based anomaly score function shares the same mathematical structure as the non-deep model, and provide comprehensive qualitative comparison. Second, considering  the widespread use of ``pre-trained models,'' we provide a mathematical analysis on data privacy leakage when models trained with CollabDict are shared externally. We show that the CollabDict approach, when applied to Gaussian mixtures, adheres to a R\'enyi differential privacy criterion. Additionally, we propose a practical metric for monitoring internal privacy breaches during the learning process. 
}

\kword{
Blockchain, collaborative learning, multi-task learning, anomaly detection, variational autoencoder}

\begin{document}
\maketitle

\section{Introduction}

The design principles of Blockchain, including decentralization, security, and transparency in transaction management, have had a profound impact on business and industrial applications. While Blockchain technology was initially proposed as a self-governing platform for currency exchange~\cite{nakamoto2008bitcoin}, it was later expanded to include general business transactions in the form of smart contracts~\cite{christidis2016blockchains}. However, in many application scenarios, such as product traceability systems, Blockchain has primarily been used as an immutable data storage system where stored data is never transformed in any way. We believe that the true value of Blockchain lies in its potential for value co-creation through knowledge sharing. For example, if car dealers collaboratively store pre-owned car repair records, they may wish to have an anomaly detection (or diagnosis) model trained on the collected data with a machine learning algorithm. This involves the process of sharing and transforming data. We envision that the next generation of Blockchain will seamlessly integrate machine learning algorithms, enabling collaborative learning functionalities.

From a machine learning perspective, collaborative learning can be formalized as decentralized multi-task learning under a data privacy constraint. Figure~\ref{fig:CbMnetwork.pdf} illustrates the problem setting. The network consists of $S$ participant members ($S>2$) connected to each other through a sparse peer-to-peer (P2P) network. Each participant, indexed with $s$, has its own dataset $\calD^s=\{\bmx^{s(1)}, \ldots, \bmx^{s(N^s)}\}$, where $\bmx^{s(n)} \in \calD^s$ represents the $n$-th sample of the $s$-th participant and $N^s$ is the number of samples in $\calD^s$. The participants' objective is to develop their own machine learning models. Since this involves learning $S$ distinct models simultaneously, it is referred to as \textit{multi-task learning} in the machine learning literature.

\begin{figure}[t]
\begin{center}
\includegraphics[trim={7cm 0.5cm 5cm 1cm},clip,width=6cm]{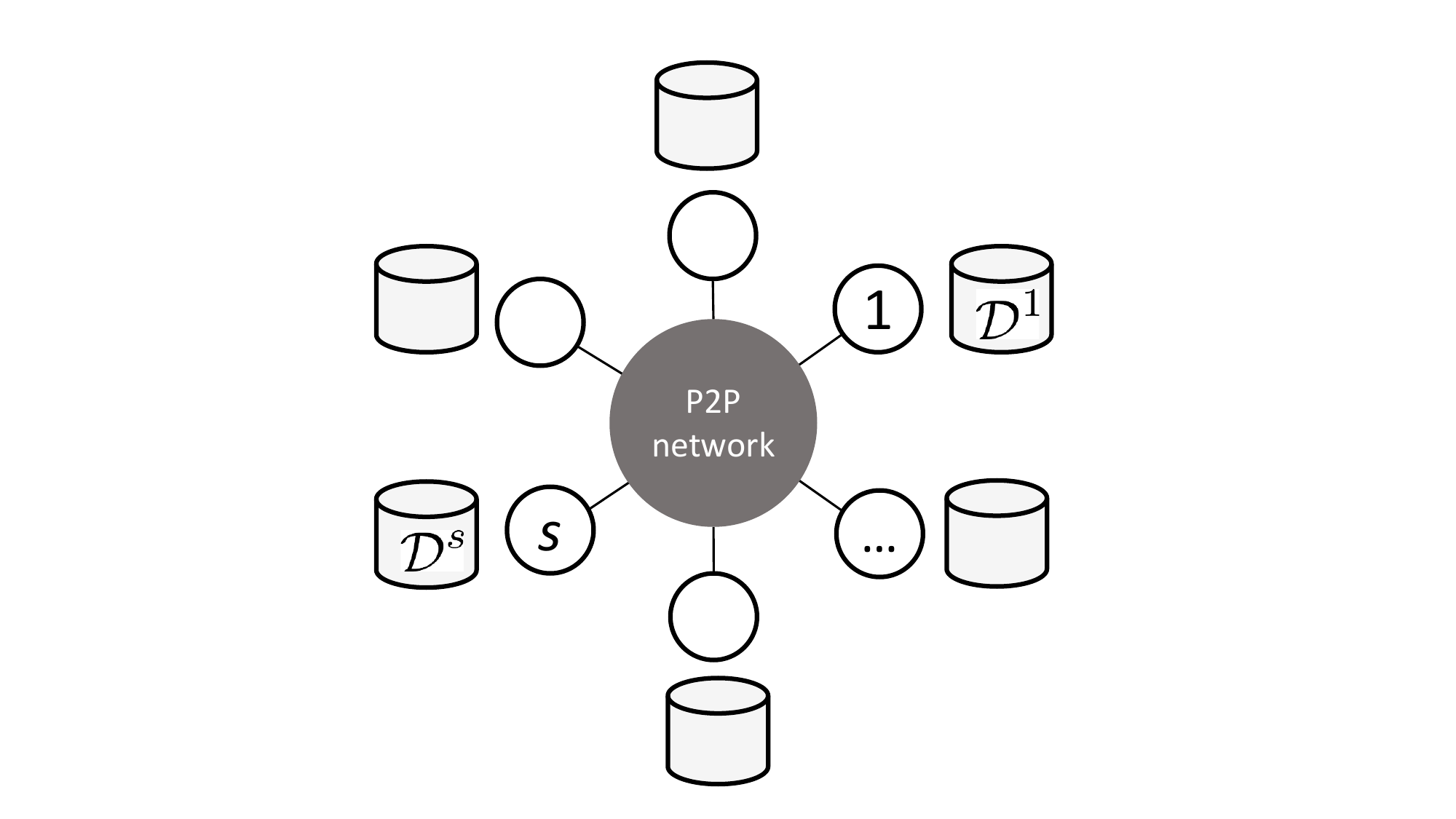}
\hspace{1cm}
\includegraphics[width=4cm]{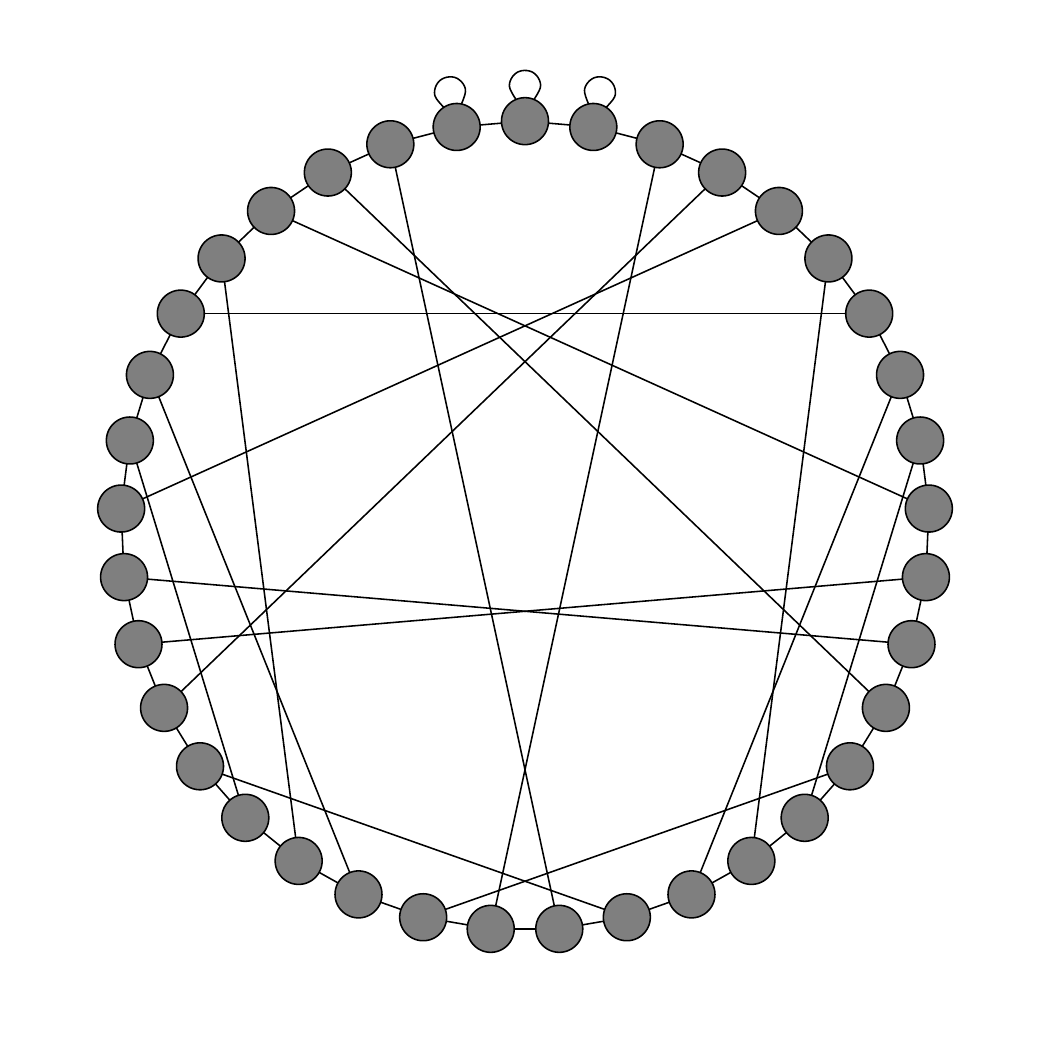}
\end{center}
\caption{Illustration of the problem setting of multi-task learning under the privacy and decentralization constraints. \textit{Left}: Each network participant ($s=1,\ldots, S$) holds its own dataset $\calD^s$ privately and builds its own machine learning model with the hope that other participants' data would help improve the model. \textit{Right}: An example of a peer-to-peer (P2P) communication network. A 3-regular expander graph called the cycle with inverse chord is shown for $S=31$ participants. }
\label{fig:CbMnetwork.pdf}
\end{figure}

While multi-task learning under decentralized and privacy-preservation constraints is an interesting extension of the traditional machine learning paradigm, meeting both constraints is generally challenging. This is because these constraints work in opposite directions: Decentralization implies ``sharing with anyone,'' whereas privacy implies ``\textit{not} sharing with anyone.'' CollabDict (collaborative dictionary learning)~\cite{ide2018collaborative,ide2019efficient} is the first framework that successfully addresses both. However, there are two major limitations. \textit{First}, the framework was specifically derived for the sparse Gaussian graphical model (GGM) mixtures, and it is not clear whether it is applicable to more general settings, such as those using deep learning. \textit{Second}, while a previous study~\cite{ide2021decentralized} analyzes the risk of privacy breach within the network in the training phase (i.e., internal privacy leakage), little is known about external privacy leakage when the trained model is shared with a third party as a pre-trained model.

In this paper, we introduce a novel collaborative learning framework, with a particular focus on unsupervised anomaly detection, based on a multi-task extension of the variational autoencoder (VAE)~\cite{kingma2013auto}. Unlike traditional autoencoders (AEs), VAEs are capable of providing a predictive probability distribution of the observed variables. This allows us to quantify the range of normalcy in a principled way, making VAEs more useful than AEs in practice. While VAEs have been extensively used in anomaly detection~\cite{an2015variational,xu2018unsupervised,chen2020unsupervised}, the multi-task extension under privacy and decentralized constraints has been considered challenging. We show that the proposed multi-task VAE can naturally fit the CollabDict framework. Interestingly, we will point out that the new VAE-based approach shares the same mixture representation for the anomaly score function, demonstrating the general applicability of the original concept of CollabDict. Encouraged by this fact, we provide a theoretical analysis of external privacy leakage under the GGM setting with the aid of the R\'enyi differential privacy theory~\cite{mironov2017renyi}. To the best of our knowledge, this is the first work that provides a theoretical guarantee on external privacy leakage under the decentralized setting.

To summarize, our contributions are twofold:
\begin{itemize}
    \item We propose a novel VAE-based multi-task anomaly detection framework under decentralized and data privacy constraints. 
    \item We provide the first mathematical guarantee of external data leakage when a model trained on CollabDict is used externally by a third party. 
\end{itemize}

\section{Related Work}\label{sec:related}

Most of the applications of Blockchain in domains involving real-valued noisy data are related to product traceability~\cite{tse2017blockchain, lu2017adaptable, toyoda2017novel}. There are few studies in the literature on collaborative learning. Recently, M{\"u}nsing et al.~proposed a Blockchain-based optimization framework for efficiently scheduling a micropower grid~\cite{munsing2017blockchains}. This framework shares the concept of separating local and global variables with ours, but it is not applicable to multi-task anomaly detection and does not discuss privacy preservation. Other partially relevant studies include~\cite{xie2017privacy}, which discusses privacy-preserving multi-task learning (MTL) in a distributed setting. However, it lacks the context of Blockchain, especially consensus building, and its mathematical treatment of privacy seems incomplete. Another recent paper~\cite{ide2018collaborative} proposes an MTL-based fault detection framework, but it does not cover privacy preservation and consensus building.

Data privacy protection in the distributed learning setting has been actively studied in the field of federated learning. Chen et al.~\cite{chen2021ds2pm} studied privacy protection in federated learning and proposed an approach based on homomorphic encryption. While the proposed framework allows decentralized secret data exchange, handling the diversity of network participants and therefore multi-task learning is not within the main scope. Masurkar et al.~\cite{masurkar2023using} proposed another decentralized, privacy-preserving federated learning framework, but the aspects of multi-task learning and anomaly detection are not discussed. Mahmood et al.~\cite{mahmood2022blockchain} address the risk of having malicious participants in privacy-preserving federated learning.

The work by Zhao et al.~\cite{zhao2019multi} may be closest to ours, which proposed a deep learning framework in the multi-task and distributed learning setting. The idea of global and local updates, which is shared by the CollabDict framework, is included there. However, it is only in the supervised setting, and the communication protocol relies on a central server.

Training deep learning models in a distributed setting is one of the recent popular topics in machine learning. In the context of VAE, for example, recent works include Polato\cite{polato2021federated} and Li et al.~\cite{li2023distvae}, which proposed federated VAE learning for collaborative filtering. Also, Zhao et al.~\cite{zhao2024deep} proposed a secure distributed VAE based on homomorphic encryption. Zhang et al.~\cite{zhang2021federated} and Huang et al.~\cite{huang2022novel} addressed the task of anomaly detection from multivariate time-series data using VAE. These works are either not for multi-task learning or not in the decentralized setting. Very recently, much attention has been paid to multi-task reinforcement learning (MTRL)~\cite{omidshafiei2017deep, zeng2021decentralized}, in which the goal is to learn a policy shared by multiple tasks, but standard anomaly detection tasks are typically not within their scope.

\section{Problem Setting}\label{sec:setting}

We consider a scenario where $S$ parties wish to collaboratively build anomaly detection models through communication on a P2P network. Depending on the communication protocol, the network can be dynamically created, but participants are assumed to know who their peers are during communication. Each network participant, indexed with $s\in \{1,\ldots, S\}$, has its own dataset $\calD^s = \{\bmx^{s(1)}, \ldots, \bmx^{s(N^s)} \}$. Here, $\bmx^{s(n)}$ represents the $n$-th sample of the $s$-th dataset, and $N^s$ is the number of samples in $\calD^s$. It is possible for some participants to have more samples than others. To distinguish between a random variable and its realization, we attach an index specifying an instance. For example, $\bmx^s$ is a random variable representing participant $s$'s data generation mechanism, while $\bmx^{s(n)}$ is a realization (i.e., a numerical vector). We refer to the participant's data generating mechanism as the \textit{system}. We assume that all samples are in $\mathbb{R}^M$ (i.e., $M$-dimensional real-valued vectors) and follow the same data format. For instance, if the 8th dimension of the second participant's data, $x_8^{2(n)}$, is a temperature measurement in Celsius, then $x^{s(n)}_8$ represents temperatures in Celsius for any $s,n$.

We focus on an unsupervised learning scenario in anomaly detection. The observed samples $\{ \bmx^{s(n)} \}$ are assumed to be generally noisy. One standard definition of anomaly scores in such a situation is the logarithmic loss (See, e.g.,~\cite{lee2000information,Yamanishi2000,staniford2002practical,noto2010anomaly}):
\begin{align}\label{ed:anomaly_score_general}
a^s(\bmx^{s,\mathrm{test}}) = - \ln p^s(\bmx^{s,\mathrm{test}}),
\end{align}
which can be also viewed as the negative one-sample likelihood with respect to the predictive distribution. An unusually low likelihood is an indication of anomalousness, yielding a high anomaly score. 
Here, $p^s(\cdot)$ denotes the estimated probability density function (pdf) of the $s$-th system. Unlike straightforward metrics such as the reconstruction error~\cite{Chandola09AnomalySurvey,chapel2014anomaly,ruff2021unifying}, the log loss score can handle prediction uncertainty in a systematic manner. The definition of~\eqref{ed:anomaly_score_general} is supported also from an information-theoretic perspective~\cite{yamanishi2023learning}. 

The task of unsupervised anomaly detection is now a density estimation problem. It is important to note that the pdf depends on $s$. Thus, the entire task is to learn $S$ pdfs simultaneously. In machine learning, this setting is typically referred to as \textit{multi-task learning}. 
A key concept in multi-task learning is task-relatedness. However, under the constraint of data privacy, where participants cannot directly share their local data with others, leveraging other participants' data for improving the models is not straightforward. The key research question here is how selfish participants can collaboratively build the model while maintaining data privacy.

\section{The CollabDict Framework}

CollabDict~\cite{ide2018collaborative,ide2019efficient} is an iterative procedure derived from the maximum a posterior (MAP) principle to perform multi-task learning under privacy and decentralized constraints. The algorithm aims to find model parameters of the pdf $p^s(\cdot)$ by maximizing the log marginalized likelihood. It is referred to as ``dictionary learning'' since it is designed to discover a pool of multiple patterns (i.e., a pattern dictionary) so that diversity over the participants' systems is properly captured. See Subsection~\ref{subsec:anomaly_score_Gaussian_mixture} for further discussion. 

In CollabDict, $p^s(\cdot)$ is generally parameterized as $p^s(\cdot \mid \bmphi^s, \bmtheta)$, where $\bmphi^s$ is the set of local parameters that are privately held within the $s$-th participant and $\bmtheta$ is the set of global parameters that are shared by all participants. Algorithm~\ref{algo:CoDiBlock} shows the overall procedure of CollabDict, where $\{\bmtheta^s\}$ are intermediate variables used to compute $\bmtheta$. At convergence, each participant ends up having $(\bmphi^s, \bmtheta)$ at hand as the final outcome. It consists of three main modules: $\mathtt{local\_update}, \mathtt{consensus}$, and $\mathtt{optimize}$. In these modules, $\mathtt{local\_update}$ and $\mathtt{optimize}$ loosely correspond to the expectation and maximization in the EM (expectation-maximization) algorithm~\cite{Bishop}, respectively. The $\mathtt{consensus}$ module is unique to privacy-preserving decentralized computation and is discussed in the next sections. Following the previous work, we assume that all the participants are \textit{honest but curious}, meaning that they do not lie about their data and computed statistics but they always try to selfishly get as much information as possible from the others.

\begin{algorithm}[bth]
\caption{\texttt{CollabDict} (collaborative dictionary learning)}
\label{algo:CoDiBlock}
\begin{algorithmic}
\STATE Initialize local and global model parameters. 
\STATE Set hyper-parameters to the agreed-upon values.
\REPEAT
\FOR {$s \leftarrow 1, \ldots, S$}
\STATE $\bmphi^s,\bmtheta^s \leftarrow \mathtt{local\_update}(\bmphi^s,  \bmtheta)$
\ENDFOR
\STATE $\bmtheta \leftarrow \mathtt{consensus}(\bmtheta^1, \ldots, \bmtheta^S)$
\FOR {$s \leftarrow 1, \ldots, S$}
\STATE $\bmtheta \leftarrow \mathtt{optimize}(\bmtheta)$
\ENDFOR
\UNTIL{convergence}
\end{algorithmic}
\end{algorithm}

\subsection{Dynamical consensus algorithm}

In the $\mathtt{consensus}$ module, there are two main technical problems. One is how to perform aggregation without a central coordinator. The other is how to eliminate data privacy breach for a given aggregation algorithm. This and the next subsections discuss these problems.

First, let us consider how aggregation is achieved without a central coordinator. Let $\mathsf{A} \in \{0,1\}^{S\times S}$ be the adjacency  matrix of a graph where nodes represent the network participants (or their systems) and edges represent (potentially bilateral) communication channels between the nodes. We assume that each participant knows their connected neighbors. Since the summation over tensors (vectors, matrices, etc.) can be performed element-wise, without loss of generality, we consider the problem of computing the average of scalars $\{ \xi^s \}$:
\begin{align}\label{eq:averageConsensus}
\bar{\xi} = \frac{1}{S}\sum_{s=1}^S \xi^s.
\end{align}
For each $s$, the dynamical consensus algorithm generates a sequence $\xi^s(t)$ with $t=0, 1, 2, \ldots$ and $\xi^s(0) = \xi^s$ such that $\xi^s(\infty)$ converges to $\Bar{\xi}^s$. Upon going from $t$ to $t+1$, the participant receives the current value from the neighboring nodes and performs an update:
\begin{align}\label{eq:Dynamics_elementwise}
\xi^s(t+1) = \xi^s(t) + \frac{1}{S}\sum_{j=1}^S \mathsf{A}_{s,j}[\xi^j(t) - \xi^s(t)]. 
\end{align}
In vector form, this can be written as
\begin{align}\label{eq:Dynamics_matrix}
\bm{\xi}(t+1) = \left[\mathsf{I}_S -(1/S)\mathsf{L}\right]\bm{\xi}(t),
\end{align}
where $\bm{\xi}(t) \triangleq  (\xi^1(t),\ldots,\xi^S(t))^\top$, $\mathsf{I}_S$ is the $S$-dimensional identity matrix, and $\mathsf{L} \triangleq  \mathsf{D}-\mathsf{A}$ is the graph Laplacian. Here $\mathsf{D}$ is the degree matrix defined by $\mathsf{D}_{i,j}=\delta_{i,j}\sum_{s=1}^S \mathsf{A}_{i,s}$ with $\delta_{i,j}$ being Kronecker's delta. 
As can be seen, $\frac{1}{\sqrt{S}}\bm{1}_S$ is an $\ell_2$-normalized eigenvector of $\mathsf{W} \triangleq  \mathsf{I}_S -(1/S)\mathsf{L}$ with an eigenvalue of 1. If $\sfA$ has been properly chosen such that the eigenvalue 1 is unique and the other absolute eigenvalues are less than 1, Eq.~\eqref{eq:Dynamics_matrix} will converge to the stationary solution $\bm{\xi}^*$
\begin{align}
\bm{\xi}^* = \left(1\cdot \frac{1}{\sqrt{S}}\bm{1}_S\frac{1}{\sqrt{S}}\bm{1}_S^\top\right)\bm{\xi}(0)
=  \bm{1}_S \frac{1}{S}\sum_{s=1}^S \xi^s = \bar{\xi}\bm{1}_S.
\end{align}
This implies that \textit{all participants have the same value of $\bar{\xi}$ upon convergence}, achieving average consensus. The distribution of eigenvalues of $\sfA$ greatly affects the convergence speed. Id\'e and Raymond~\cite{ide2021decentralized} demonstrated that the number of iterations until convergence scales with $\ln S$ if the network satisfies the definition of an expander graph. This is a remarkably fast convergence rate. See Fig.~\ref{fig:CbMnetwork.pdf} (right) for an example of an expander graph.

\subsection{Random chunking for data privacy protection}

Next, let us consider how data privacy is protected in the dynamical consensus algorithm. One obvious issue with the above algorithm is that the connected peers can see the original value in the first iteration, which is a breach of data privacy. There are two known solutions to address this issue~\cite{ide2021decentralized}. One is Shamir's secret sharing and the other is random chunking. Here, we will outline the latter. For details of Shamir's secret sharing, please refer to Reference~\cite{ide2021decentralized}. 

The idea behind random chunking is simple. Before running the consensus algorithm, the participants divide the data $\xi^s$ into $C$ chunks $\xi^{s[1]}, \ldots, \xi^{s[C]}$ in any arbitrary manner, as long as $\sum_{l=1}^C \xi^{s[l]} = \xi^s$. Then, they independently run the consensus algorithm $C$ times, each time for a different chunk, to obtain $C$ averages $\bar{\xi}^{[1]}, \ldots, \bar{\xi}^{[C]}$. Since aggregation is a linear operation, we have:
\begin{align}
    \Bar{\xi} = \bar{\xi}^{[1]} + \cdots + \bar{\xi}^{[C]}.
\end{align}

For better privacy protection, it is advisable to randomly change the node labels for each aggregation session so that each participant has a different set of connected peers every time.

\subsection{Sparse Gaussian graphical model mixture: Model definition}
\label{subsec:GGM-mixture_model}

As a concrete example, we provide an algorithm for a sparse Gaussian graphical model (GGM) mixture, which has been extensively discussed in our previous work~\cite{ide2019efficient}. In this model, the data observation model of system $s$ is given by
\begin{align}\label{eq:obs-x}
p({\bm x}^s \mid {\bm z}^s, {\bm \mu}, {\sf \Lambda}) = 
\prod_{k=1}^K {\cal N}({\bm x}^s \mid {\bm \mu}_k, ({\sf \Lambda}_k)^{-1})^{z_k^s}.
\end{align}
Here, $\calN(\cdot \mid \bmmu_k, (\sfLambda_k)^{-1})$ denotes a Gaussian distribution with the mean vector $\bmmu_k \in \mathbb{R}^M$ and the precision matrix $\sfLambda_k  \in \mathbb{R}^{M\times M}$. On the l.h.s., we used the collective notations ${\bm \mu}$ and ${\sf \Lambda}$  representing $\{ {\bm \mu}_k \}$ and $\{ {\sf \Lambda}_k \}$, respectively. In this model, one ``pattern'' is represented by a Gaussian distribution with $(\bmmu_k, \sfLambda_k)$. Since it is unobserved which pattern a sample $\bmx^s$ belongs to, we need a latent variable representing pattern selection. ${\bm z}^s$ is the indicator variable over $K$ different patterns with $z_k^s \in \{ 0,1\}$ for all $s$ and $\sum_{k=1}^K z_k^s=1$. The total number of patterns, $K$, must be provided upfront as a hyper-parameter. However, one can automatically determine an appropriate number starting from a sufficiently large $K$ by using the cardinality regularization technique. See Refs.~\cite{Ide17ICDM,phan2019L0} for technical details. 

For probabilistic inference, prior distributions are assigned to $\bmmu_k,\sfLambda_k, \bmz^s$:
\begin{gather}\label{eq:Gauss-Laplace}
p({\bm \mu}_k,{\sf \Lambda}_k) \propto
{\cal N}({\bm \mu}_k \mid {\bm m}_0,(\lambda_0{\sf \Lambda}_k)^{-1})
\exp\left( -\frac{\rho}{2} \| {\sf \Lambda}_k \|_1 \right)
\\ \label{eq:p(z|pi)}
p({\bm z}^s| {\bm \pi}^s) 
 = \mathrm{Cat}({\bm z}^s| {\bm \pi}^s) 
\triangleq \prod_{k=1}^K (\pi_k^s)^{z^s_k} ,
\end{gather}
where $\rho, \lambda_0,\bm{m}_0$ are predefined constants, ``$\mathrm{Cat}$'' denotes the categorical distribution, and $\|\cdot\|_1$ denotes the $\ell_1$-norm. The parameter $\bm{\pi}^s$ must satisfy $\sum_{k=1}^K \pi_k^s=1$ and $0 \leq \pi_k^s \leq 1$. In this model, task-relatedness is represented by the $K$ sets of model parameters $\{(\bmmu_k,\sfLambda_k)\}_{k=1}^K$ that are shared by all the participants, while the distribution $\bmpi^s$ represents a unique property of the participant $s$. As above, we will use $p(\cdot)$ and $q(\cdot)$ as symbolic notation representing probability density functions in general, rather than a specific functional form.

\subsection{CollabDict implementation of GGM mixture}
\label{subsec:CollaboDict_derivation_GGM_mixture}

In this generative model, the local parameter set $\bmphi^s$ in Algorithm~\ref{algo:CoDiBlock} is 
\begin{align}
    \bmphi^s = \left\{ (\pi^s_1,\ldots, \pi^s_K), (r^{s(1)}_1, \ldots, r^{s(1)}_K), \ldots, (r^{s(N^s)}_1, \ldots, r^{s(N^s)}_K) \right\},
\end{align}
and the intermediate parameters are
\begin{align}
    \bmtheta^s =\left\{ (N^s_1,\ldots,N^s_K), (\bmm^s_1,\ldots, \bmm^s_K), (\sfC^s_1,\ldots, \sfC^s_K)  \right\}.
\end{align}
Here, the intuition of these parameters is as follows. $r^{s(n)}_k$ is the probability that $\bmx^{s(n)}$ belongs to the $k$-th pattern. $N^s_k$ is the average number of samples in $\calD^s$ that fall into the $k$-th pattern. $\pi^s_k$ is the probability that an arbitrary sample in $\calD^s$ belongs to the $k$-th pattern. $\bmm^s_k, \sfC^s_k$ correspond to the mean vector and the scatter matrix of the $k$-th pattern computed only with the samples in $\calD^s$.

$\mathtt{local\_update}$ computes these parameters in the following way, assuming that $\bmmu,\sfLambda$ and $\pi_k^s$ have been properly initialized: 
\begin{gather}\label{eq:r_sk_posterior}
r_k^{s(n)} = \frac{\pi_k^s \calN({\bm x}^{s(n)} \mid {\bm m}_k, ({\sf \Lambda}_{k})^{-1})}{
\sum_{l=1}^K \pi_l^s \calN({\bm x}^{s(n)} \mid {\bm m}_l, ({\sf \Lambda}_{l})^{-1})
}, \\
\label{eq:N_sk}
N^s_k  =\sum_{n \in \calD^s} r_k^{s(n)}, \quad 
\pi^s_k = \frac{N^s_k}{\sum_{l=1}^K N^s_l},\quad  
\bmm^s_k  = \sum_{n \in \calD^s}r_k^{s(n)}\bm{x}^{s(n)}, \quad
\sfC^s_k  = \sum_{n \in \calD^s}r_k^{s(n)}\bm{x}^{s(n)}{\bm{x}^{s(n)}}^\top,
\end{gather}
For each participant $s$, these are computed for all $k=1,\ldots,K$ and $n\in \calD^s$.

Once $\bmphi^s,\bmtheta^s$ are computed for each $s$, the participants start running the $\mathtt{consensus}$ routine to compute aggregations. Specifically, it computes
\begin{align}\label{eq:barN_barm_barC}
    \Bar{N}_k = \sum_{s=1}^S N_k^s,\quad 
    \Bar{\bmm}_k = \frac{1}{\Bar{N}_k}\sum_{s=1}^S \bmm^s_k, \quad 
    \Bar{\sfC}_k = \frac{1}{\Bar{N}_k}\sum_{s=1}^S \sfC^s_k
\end{align}
with the dynamical consensus algorithm. Upon convergence, all the participants end up having these aggregated values in addition to the local parameters $\bmphi^s$.

Finally, each participant runs $\mathtt{local\_update}$ to obtain $\bmtheta=\left\{ (\bmmu_1,\ldots,\bmmu_K), (\sfLambda_1, \ldots, \sfLambda_K)\right\}$ based on $\Bar{N}_k, \Bar{\bmm}_k$ and $\Bar{\sfC}_k$:
\begin{align}
\label{eq:mu-posterior}
\bmmu_k &= \frac{1}{\lambda_0 + \Bar{N}_k}(\lambda_0{\bm m}_0 + \Bar{N}_k\bar{\bmm}_k),
\\ \nonumber
\sfSigma_k &= \bar{\sfC}_k - \bar{\bmm}_k\bar{\bmm}_k^\top +
\frac{\lambda_0}{\lambda_0 + \bar{N}_k}
(\bar{\bmm}_k - \bmm_0)(\bar{\bmm}_k - \bmm_0)^\top,\\
\label{eq:Lamk}
{\sf \Lambda}_{k} &= \arg \max_{{\sf \Lambda}_k}\left\{
 \frac{\bar{N}_k+1}{\bar{N}_k} \ln|{\sf \Lambda}_k|
- \mathrm{Tr}({\sf \Lambda}_k \sfSigma_k) -\frac{\rho}{\bar{N}_k}\|{\sf \Lambda}_k \|_1 \right\},
\end{align}
where $|{\sf \Lambda}_k|$ is the matrix determinant. The objective function in Eq.~\eqref{eq:Lamk} is convex and can be efficiently solved by the graphical lasso algorithm~\cite{Friedman08glasso}. Due to the $\ell_1$ regularization, the resulting precision matrix is sparse, i.e.,~has many zero entries. Sparsity not only improves numerical stability but also contributes to better interpretability. Since the matrix elements of $\sfLambda_k$ represent the partial correlation coefficients according to the theory of the Gaussian graphical model~\cite{Lauritzen1996}, one can easily find directly correlated variables for each variable.

\subsection{Anomaly score of GGM mixture}
\label{subsec:anomaly_score_Gaussian_mixture}

Now that we have described how to obtain model parameters $\{ \bmpi^s, \bmmu, \sfLambda\}$, let us provide a concrete expression for the anomaly score in Eq.~\eqref{ed:anomaly_score_general}. One issue here is that the observation model~\eqref{eq:obs-x} has a latent variable $\bmz^s$. One standard procedure to define an anomaly score in such a case is to use the posterior average. In the present multi-task context, it is given by
\begin{align}\label{eq:anomaly_score_with_latent}
    a^s(\bmx^{s,\mathrm{test}}) = \int \rmd\bmz^s \ q(\bmz^s)\left\{ 
    - \ln p(\bmx^s \mid \bmz^s, \bmmu,\sfLambda)
    \right\}.
\end{align}
Here, $q(\bmz^s)$ is the posterior distribution, given the test sample. Similar to the in-sample posterior in Eq.~\eqref{eq:r_sk_posterior}, $q(\bmz^s)$ is a categorical distribution as
\begin{align}
    q(\bmz^s) = \prod_{k=1}^K \left\{ r^s_k(\bmx^{s,\mathrm{test}})\right\}^{z^s_k}, \quad \quad 
    r_k^{s}(\bmx^{s,\mathrm{test}}) = \frac{\pi_k^s \calN({\bm x}^{s,\mathrm{test}} \mid {\bm m}_k, ({\sf \Lambda}_{k})^{-1})}{
\sum_{l=1}^K \pi_l^s \calN({\bm x}^{s,\mathrm{test}} \mid {\bm m}_l, ({\sf \Lambda}_{l})^{-1})}
\end{align}
and hence, the integration in Eq.~\eqref{eq:anomaly_score_with_latent} should be understood as the summation over the $K$ different choices of $\bmz^s$. Performing the summation, we have
\begin{align}\label{eq:anomalyscore_GGM}
    a^s(\bmx^{s,\mathrm{test}}) = - \sum_{k=1}^K r^s_k(\bmx^{s,\mathrm{test}}) \ln \calN(\bmx^{s,\mathrm{test}} \mid \bmmu_k, (\sfLambda_k)^{-1}).
\end{align}
This expression clearly represents the intuition of dictionary learning. For a test sample $\bmx^{s,\mathrm{test}}$, we first evaluate the degree of fit $r^s_k(\bmx^{s,\mathrm{test}})$ for all $k$'s. Some of the $K$ patterns may have dominating weights while others may be negligible. The final anomaly score is a weighted sum over the one-pattern anomaly scores $\{ -\ln \calN(\bmx^{s,\mathrm{test}} \mid \bmmu_k, (\sfLambda_k)^{-1})\}$. It is straightforward to see that this probabilistic definition of the anomaly score is a natural generalization of the ones based on the reconstruction errors. By using the explicit expression of the Gaussian distribution, we have 
\begin{align}
    a^s(\bmx^{s,\mathrm{test}}) =  \sum_{k=1}^K r^s_k(\bmx^{s,\mathrm{test}}) \left\{
    \frac{M}{2}\ln (2\pi) -\frac{1}{2}\ln |\sfLambda_k| + \frac{1}{2}(\bmx^{s,\mathrm{test}} - \bmmu_k)^\top \sfLambda_k (\bmx^{s,\mathrm{test}} - \bmmu_k)
    \right\}.
\end{align}
The third term in the parenthesis is often called the Mahalanobis distance, which has been used as the main anomaly metric in the classical Hotelling's $T^2$ theory~\cite{Anderson03}. If $\bmx^{s,\mathrm{test}}$ has a dominant weight at a certain $k$, the $\bmmu_k$ can be viewed as a denoised version of $\bmx^{s,\mathrm{test}}$. The similarity to the reconstruction error by the squared loss is obvious.

\section{Deep Decentralized Multi-Task Density Estimation}\label{sec:VAE}

While the sparse GGM mixture-based CollabDict framework serves as a robust and interpretable tool for anomaly detection, its reliance on Gaussian assumptions may limit its effectiveness in systems requiring considerations of higher-order correlations. This section discusses an extension of CollabDict to the variational autoencoder (VAE), a standard deep-learning-based density estimation algorithm.

\subsection{Probabilistic model definition}

Observing the GGM-mixture model discussed in the previous section, one interesting question is what happens if we replace the observation model in Eq.~\eqref{eq:obs-x} with the neural Gaussian distribution:
\begin{align}\label{eq:Neural_Gaussian_obs}
    p(\bmx^s \mid \bmz^s, \bmtheta) &= \calN(\bmx^s \mid \bmmu_{\bmtheta}(\bmz^s), \sfSigma_{\bmtheta}(\bmz^s)).
\end{align}
Here, $\bmz^s$ is a latent variable associated with $\bmx^s$. We assume that $\bmz^s$ is in $\mathbb{R}^d$ and the mean and the covariance matrix in Eq.~\eqref{eq:Neural_Gaussian_obs} are represented by multi-layer perceptrons (MLPs):
\begin{align}\label{eq:Neural_Gaussian_MLP}
    \bmmu_{\bmtheta}(\bmz^s) = \MLP_{\bmtheta}(\bmz^s), \quad 
    \sfSigma_{\bmtheta}(\bmz^s) = \MLP_{\bmtheta}(\bmz^s).
\end{align}
Typically, $\sfSigma_{\bmtheta}$ is assumed to be a diagonal matrix such as $\sfSigma_{\bmtheta}= \diag(\bmsigma_{\bmtheta})^2$, where $\bmsigma_{\bmtheta} \triangleq (\sigma_{\bmtheta,1}, \ldots, \sigma_{\bmtheta,M})^\top$. The functional form of the MLPs has a lot of flexibility. One typical choice is (see, e.g.,~\cite{xu2018unsupervised}):
\begin{align}\label{eq:Neural_Gaussian_MLP2}
    \bmg = \ReLU(\Linear(\bmz^s)), \quad \bmmu_{\bmtheta}=\Linear(\bmg), \quad
    \bmsigma_{\bmtheta} = \ln(1 + \exp(\Linear(\bmg))),
\end{align}
where $\Linear(\cdot)$ performs the Affine transform such that $\Linear(\bmz^s) = \sfW \bmz^s + \bmb$ ($\sfW$ is a parameter matrix and $\bmb$ is a parameter vector to be learned) and $\ReLU$ denotes the rectified linear unit (or the hinge loss function). All the functions operate element-wise. The r.h.s.~of the last equation is called the softmax function. Model parameters $\bmtheta$ to be learned in this case are transformation matrices and intercept vectors in the three Affine transforms. 

Since the latent variable $\bmz^s$ is assumed to be in $\mathbb{R}^d$ this time, we assign a Gaussian prior to $\bmz^s$:
\begin{align}
    p(\bmz^s) = \calN(\bmz^s \mid \bmzero, \sfI_d),
\end{align}
where $\sfI_d$ is the $d$-dimensional identity matrix.

\subsection{Multi-task variational autoencoder}

Since $\bmz^s$ is unobservable, the marginalized likelihood is used to learn the model parameters $\bmtheta$. The log marginalized likelihood is defined by
\begin{align}
    L_0(\bmtheta) &= \ln \prod_{s=1}^S\prod_{n\in\calD^s} \int\rmd \bmz^{s(n)}\ 
    p\left(\bmx^{s(n)} \mid \bmmu_{\bmtheta}(\bmz^{s(n)}), \sfSigma_{\bmtheta}(\bmz^{s(n)})\right)p(\bmz^{s(n)}).
\end{align}
While the integration is analytically intractable, generating samples from the Gaussian prior is straightforward and hence, one might want to use Monte Carlo estimation for estimating the integral. However, as discussed by Kingma and Welling~\cite{kingma2013auto}, direct Monte Carlo estimation of the pdf is numerically challenging, since the probability mass is often concentrated in very limited areas and takes almost zero value everywhere else. 

One standard approach to address these issues is to leverage the variational lower bound derived with Jensen's inequality:
\begin{align}
    L_0\geq L & \triangleq
    \sum_{s=1}^S\sum_{n\in\calD^s} \int\rmd \bmz^{s(n)}\ 
    q(\bmz^{s(n)}) \ln \frac{
    p\left(\bmx^{s(n)} \mid \bmz^{s(n)}, \bmtheta\right)p(\bmz^{s(n)})}{
    q(\bmz^{s(n)}) 
    }.
\end{align}
The inequality holds for any distribution $q(\bmz^{s(n)})$. Notice that the integrand is now $\ln p$, which is more tractable than the pdf itself. Similar to the EM iteration for the mixture model, one can use a two-stage learning strategy: 1) Optimize $q(\cdot)$, given $\bmtheta$, and 2) optimize $\bmtheta$, given $q(\cdot)$. Unlike the GGM mixture case, however, the first stage does not lead to an analytic solution due to the nonlinear dependency on $\bmz^s$ in the MLPs. Thus, we introduce another parametric model to approximate the $q(\bmz^s)$ that would provide the tightest bound:
\begin{align}
    q(\bmz^s) &\approx q(\bmz^s \mid \bmx^s, \bmphi^s) = \calN\left(\bmz^s \mid   \bmm_{\bmphi^s}(\bmx^s), \sfH_{\bmphi^s}(\bmx^s)\right),
\end{align}
where the mean and the covariance matrix are assumed to be MLPs parameterized with $\bmphi^s$ in a similar fashion to Eqs.~\eqref{eq:Neural_Gaussian_MLP}-\eqref{eq:Neural_Gaussian_MLP2}. The first stage is now an optimization problem over $\bmphi^1, \ldots, \bmphi^S$, given $\bmtheta$. Specifically, for $s=1,\ldots,S$, we solve $ \bmphi^s = \arg \max_{\bmphi^s} L^s(\bmphi^s,\bmtheta)$, given the latest $\bmtheta$, where
\begin{align}
   L^s(\bmphi^s,\bmtheta) \triangleq \sum_{n\in\calD^s} \int\rmd \bmz^{s(n)}\ 
    q(\bmz^{s(n)} \mid \bmx^{s(n)}, \bmphi^s)\ln \frac{
    p\left(\bmx^{s(n)} \mid \bmz^{s(n)}, \bmtheta\right)p(\bmz^{s(n)})}{
    q(\bmz^{s(n)} \mid \bmx^{s(n)}, \bmphi^s)
    }.
\end{align}
In the second stage, with an updated $\{\bmphi^1,\ldots, \bmphi^S\}$, we solve
\begin{align}
    \bmtheta = \arg \max_{\bmtheta}\sum_{s=1}^S L^s(\bmphi^s,\bmtheta).
\end{align}

This learning strategy closely resembles the variational autoencoder (VAE)~\cite{kingma2013auto} with one major exception that the ``encoder'' distribution $q(\bmz^s \mid \bmx^s, \bmphi^s)$ now depends on $s$, allowing us to capture situations that are specific to the $s$-th system, and hence, to perform multi-task learning. Figure~\ref{fig:Multi-task_VAE} illustrates the overall architecture of the model. Following the VAE literature, the observation model $p(\bmx^s \mid \bmz^s,\bmtheta)$ can be called the decoder distribution.

\begin{figure}[hbt]
    \centering
    \includegraphics[width=0.9\textwidth]{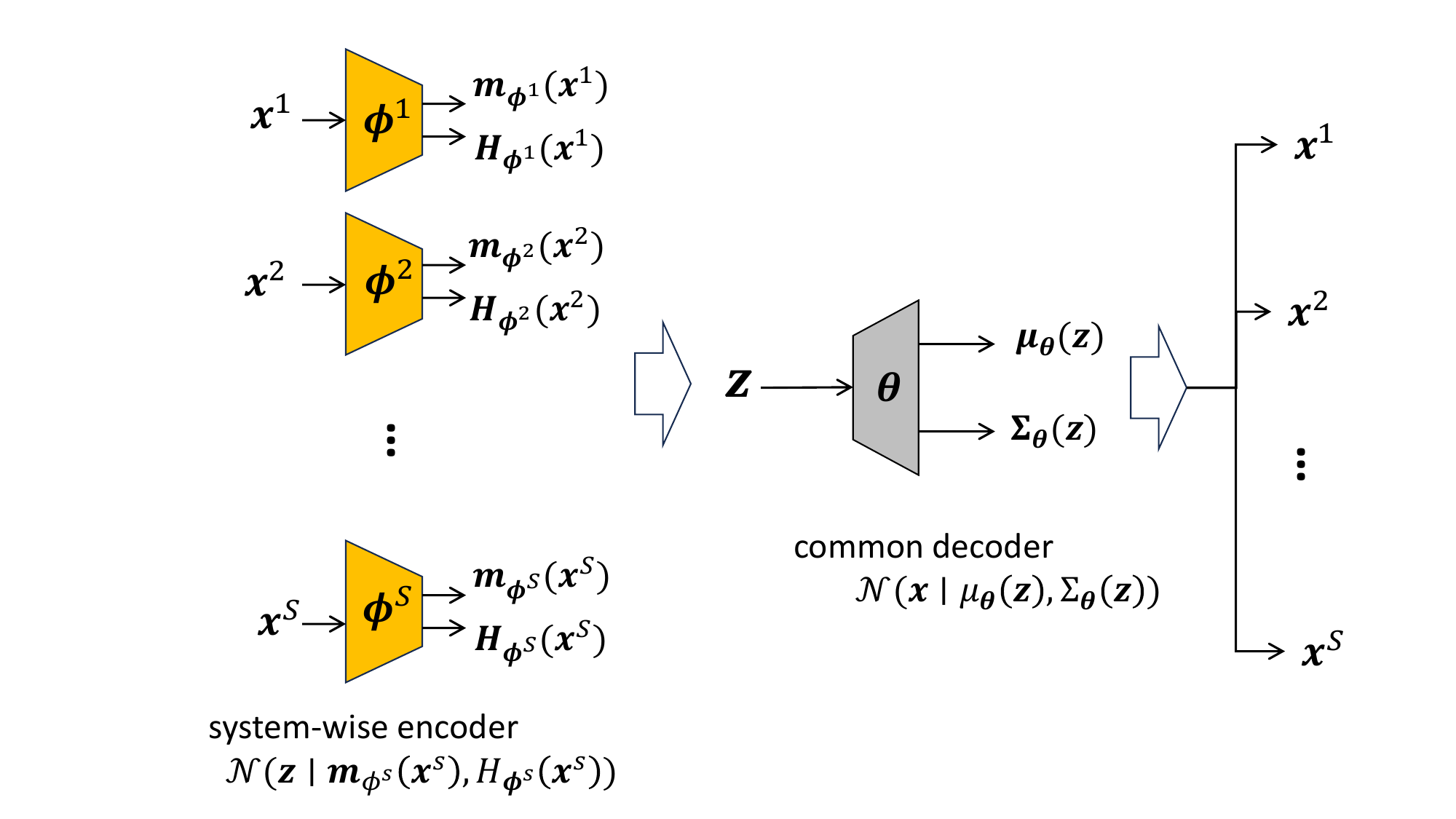}
    \caption{Overall architecture of the proposed multi-task VAE, which operates under decentralized and privacy-preserving constraints.}
    \label{fig:Multi-task_VAE}
\end{figure}

\subsection{Training procedure on CollabDict}

Now let us examine how the above two-stage procedure is applied to the CollabDict framework. First, we observe that the objective function $L^s(\bmphi^s,\bmtheta)$ is expressed as
\begin{align}\nonumber
L^s(\bmphi^s,\bmtheta)
&=\frac{1}{2}\sum_{n\in\calD^s}\left\{
\ln |\sfH_{\bmphi^s}(\bmx^{s(n)})| + d - \trace\left(\sfH_{\bmphi^s}(\bmx^{s(n)})\right) + \|\bmm_{\bmphi^s}(\bmx^{s(n)})\|_2^2
\right\} \\ \label{eq:Objective_L^s}
&+\frac{1}{2}\sum_{n\in\calD^s}\left\langle
-M\ln(2\pi)-\ln | \sfSigma_{\bmtheta}^{s(n)}|
- (\bmx^{s(n)} - \bmmu_{\bmtheta}^{s(n)})^\top (\sfSigma_{\bmtheta}^{s(n)})^{-1}(\bmx^{s(n)} - \bmmu_{\bmtheta}^{s(n)})
\right\rangle_{\bmv},
\end{align}
where $\trace(\cdot)$ represents the matrix trace and $\|\cdot\|_2$ represents the vector $\ell_2$-norm. $\sfH_{\bmphi^s}$ is assumed to be diagonal, such that $\sfH_{\bmphi^s}=\diag(\bmh_{\bmphi^s})^2$ with $\bmh_{\bmphi^s}=(h_{\bmphi^s,1}, \ldots, h_{\bmphi^s, M})^\top$. In the second line, $\langle \cdot \rangle_{\bmv}$ represents the expectation with respect to $\bmv \sim \calN(\bmv \mid \bmzero, \sfI_d)$, and
\begin{align}
\bmmu_{\bmtheta}^{s(n)} \triangleq\bmmu_{\bmtheta}\left(
\bmh_{\bmphi^s}(\bmx^{s(n)})\odot \bmv + \bmm_{\bmphi^s}(\bmx^{s(n)})
\right), \quad
\sfSigma_{\bmtheta}^{s(n)}\triangleq \sfSigma_{\bmtheta}\left(
\bmh_{\bmphi^s}(\bmx^{s(n)})\odot \bmv + \bmm_{\bmphi^s}(\bmx^{s(n)})
\right),
\end{align}
where $\odot$ denotes the element-wise product of vectors. The integration is analytically intractable and has to be done numerically. Fortunately, Monte Carlo sampling from $\calN(\bmv \mid \bmzero, \sfI_d)$ is straightforward. For an arbitrary function $F(\cdot)$, it follows:
\begin{align}
    \langle F(\bmv) \rangle_{\bmv} \approx \frac{1}{J}\sum_{j=1}^J F(\bmv^{[j]}), \quad \mbox{where}\quad \bmv^{[1]}, \ldots, \bmv^{[J]}\sim \calN(\cdot \mid \bmzero, \sfI_d).
\end{align}
This is used to evaluate the second line of $L^s(\bmphi^s,\bmtheta)$ in Eq.~\eqref{eq:Objective_L^s}.

In the proposed multi-task VAE model, stochastic gradient descent (SGD) is employed for parameter estimation. In the $\mathtt{local\_update}$ module of Algorithm~\ref{algo:CoDiBlock}, for each $s$, the local parameter $\bmphi^s$ is updated as
\begin{align}
\bmphi^s \leftarrow \bmphi^s + \eta \frac{\partial L^s(\bmphi^s,\bmtheta)}{\partial \bmphi^s},
\end{align}
where $\bmtheta$ is fixed to its latest numerical value and $\eta$ is the learning rate, which is a hyper-parameter. The mini-batch approach~\cite{zhang2023dive} can be used here, but we have omitted it for simplicity. With the updated $\bmphi^s$, the local gradient with respect to $\bmtheta$ is computed as
\begin{align}
\bmtheta^s = \frac{\partial L^s(\bmphi^s,\bmtheta)}{\partial \bmtheta}.
\end{align}
Note that these calculations can be performed locally using only $\calD^s$.

In the $\mathtt{consensus}$ module, the dynamical consensus algorithm performs simple aggregation:
\begin{align}
    \partial \bmtheta = \mathtt{consensus}(\bmtheta^1, \ldots, \bmtheta^S) = \sum_{s=1}^S \bmtheta^s.
\end{align}

Finally, in the $\mathtt{optimization}$ module, with the latest $\bmphi^s$ and $\partial \bmtheta$, the global parameter $\bmtheta$ is updated:
\begin{align}
    \bmtheta \leftarrow \bmtheta + \eta  \ \partial \bmtheta.
\end{align}
Note that this operation is done by each participant locally in the absence of a central coordinator.

\subsection{Anomaly score}

By running CollabDict for the multi-task VAE model, we can obtain the model parameters $\{\bmphi^s\}, \bmtheta$. We now turn to the general definition of the anomaly score in Eq.~\eqref{eq:anomaly_score_with_latent}. Using the encoder and decoder distributions, we have 
\begin{align}\label{eq:anomaly_score_with_latent_VAE}
    a^s(\bmx^{s,\mathrm{test}}) 
    &= - \int \rmd\bmz^s \ q(\bmz^s\mid  \bmx^{s,\mathrm{test}}, \bmphi^s) 
     \ln p(\bmx^{s,\mathrm{test}} \mid \bmz^s, \bmtheta),\\
     &= - \int \rmd\bmz^s \ \calN\left(\bmz^s \mid   \bmm_{\bmphi^s}(\bmx^{s,\mathrm{test}}), \sfH_{\bmphi^s}(\bmx^{s,\mathrm{test}})\right)
     \ln  \calN(\bmx^{s,\mathrm{test}} \mid \bmmu_{\bmtheta}(\bmz^s), \sfSigma_{\bmtheta}(\bmz^s)).
\end{align}
As discussed in Sec.~\ref{sec:setting}, the logarithmic loss or its expectation for an anomaly score has been used in the literature for a few decades often with a solid information-theoretical background~\cite{lee2000information,Yamanishi2000,staniford2002practical,noto2010anomaly,yamanishi2023learning}. In the particular context of VAE-based anomaly detection, the anomaly score in  Eq.~\eqref{eq:anomaly_score_with_latent_VAE} was first introduced by An and Cho~\cite{an2015variational}, who called it the ``reconstruction probability'' despite the fact that it is not a pdf. Since then, Eq.~\eqref{eq:anomaly_score_with_latent_VAE} has been accepted as one of the standard definitions~\cite{xu2018unsupervised}.

Because of the nonlinearity of MLPs, the integration is analytically intractable. Fortunately, sampling from a multivariate Gaussian with a diagonal covariance matrix is straightforward. Monte Carlo sampling leads to the following expression:
\begin{gather}\label{eq:anomalyscore_VAE}
     a^s(\bmx^{s,\mathrm{test}}) 
    = - \frac{1}{J}\sum_{j=1}^J 
     \ln  \calN(\bmx^{s,\mathrm{test}} \mid \bmmu_{\bmtheta}(\bmz^{[j]}), \sfSigma_{\bmtheta}(\bmz^{[j]})),\\
     \bmz^{[1]}, \ldots, \bmz^{[J]} \sim \calN\left(\cdot \mid   \bmm_{\bmphi^s}(\bmx^{s,\mathrm{test}}), \sfH_{\bmphi^s}(\bmx^{s,\mathrm{test}})\right).
\end{gather}
It is interesting to compare this with the one derived from the GGM mixture model in Eq.~\eqref{eq:anomalyscore_GGM}. In both cases, the anomaly score is a linear combination of the log-loss score of individual patterns, demonstrating the versatility of the CollabDict framework across a wide range of model classes.

\begin{table}[bth]
\centering
\caption{Comparison between decentralized GGM mixture and multitask VAE. }
\label{tab:GGM-VAE_comparison}
\begin{tabular}{l|p{5cm} p{5cm}}
\hline\hline
                         & decentralized GGM mixture & decentralized multi-task VAE \\
\hline
pattern dictionary       & deterministic             & on-demand                    \\[3pt]
interpretability          & 
$r^s_k(\bmx^s)$: sample-wise pattern weight\newline
$\bmpi^s$: system-wise pattern weight\newline
$\bmmu_k$: pattern center\newline
$\sfLambda_k$: variable interdependency                   
& 
$\bmm_{\bmphi^s}(\bmx)$: system-specific embedding\newline
$\sfH_{\bmphi^s}(\bmx)$: embedding confidence\newline
$\bmmu_{\bmtheta}(\bmz)$: sample-wise reconstruction\newline
$\sfSigma_{\bmtheta}(\bmz)$: reconstruction confidence
\\[5pt]
data privacy             & theoretical analysis exist          & (largely open)                       \\[3pt]
model complexity control & well understood             & (largely open)                       \\[3pt]
model instability sources & parameter initialization & parameter initialization\newline
latent dimension $d$\newline
SGD hyperparameters ($\eta$, etc.)\newline
posterior collapse\newline
Monte Carlo sampling\\[3pt]
Computational cost & low & high \\[3pt]
\hline
\end{tabular}
\end{table}

Table~\ref{tab:GGM-VAE_comparison} summarizes the comparison between two model classes for CollabDict. Although deep learning models are generally considered more expressive than non-deep models, such advantages come with a few potential issues. 

In the expression of the anomaly score, the GGM mixture model deterministically stores the pattern dictionary (Eq.~\eqref{eq:anomalyscore_GGM}) while the multi-task VAE generates a pattern dictionary ``on-demand'' through Monte Carlo samples from the posterior distribution (Eq.~\eqref{eq:anomalyscore_VAE}). In the latter, one critical issue is \textit{posterior collapse}~\cite{wang2021posterior,lucas2019understanding,dai2020usual}, which refers to the situation where the posterior $q(\bmz^s\mid  \bmx^{s}, \bmphi^s)$ collapses towards the prior $p(\bmz)$, losing the information of the input sample $\bmx^{s,\mathrm{test}}$. Posterior collapse occurs when the model is overly expressive. When it occurs, the anomaly score can be arbitrarily large even for normal samples, which is a devastating situation in anomaly detection. Due to the randomness of SGD and Monte Carlo sampling, the occurrence of posterior collapse is often unpredictable. Although the issue is reminiscent of rank deficiency in Gaussian mixtures, a few established techniques are available for Gaussian mixtures to properly regularize the model complexity such as $\ell_1$-regularization on $\sfLambda_k$ and $\ell_0$-regularization on $\bmpi^s$~\cite{Ide17ICDM,phan2019L0}. The latter is used as a tool to determine $K$ automatically. 

Another aspect in favor of the GGM mixture is the variety of factors towards model instabilities. In multi-task VAE, tuning hyper-parameters, such as the latent dimension $d$, learning rate $\eta$, batch size, etc., requires a lot of trial and error. This can be problematic in practice as the number of samples, especially anomalous samples, is often limited. Brute-force data-driven parameter tuning strategies, developed in domains that have access to internet-scale datasets (such as speech, text, and images), are often not very useful in real-world applications. This is mainly because the problem setting in real-world scenarios is typically domain-specific, and the benchmark tasks are not directly relevant.

Finally, due to the black-box nature, evaluating the risk of privacy breach is generally challenging in deep learning models. In the next section, we provide some mathematical discussions on data privacy preservation under the GGM mixture model.

\section{Privacy Preservation in CollabDict}\label{sec:privacy}

This section discusses privacy preservation in CollabDict in the GGM mixture setting. Our main focus in this section is to provide a privacy guarantee when the trained model is used by a third party as a ``pre-trained model.'' Additionally, we provide a practical means to monitor internal privacy breaches.  

\subsection{Differentially privacy in CollabDict}
\label{subsec:Differential_privacy_collabdict}

The learned pattern dictionary is shared not only among the participant nodes of the network but also externally as a pre-learned anomaly detection model, depending on business use-cases. In such a scenario, we are concerned about the risk that any of the raw samples are reverse-engineered from the published model. 

The notion of differential privacy~\cite{dwork2014algorithmic} is useful to quantitatively evaluate the privacy risk. Assume that we have created a new dataset $\tilde{\mathcal{D}}$ by perturbing a single sample, say the ${n'}$-th sample of the $s$-th participant: ${\bm{x}}^{s({n'})} \rightarrow \tilde{\bm{x}}^{s({n'})}$. As illustrated in Fig.~\ref{fig:privacy}, when publishing the global state variable, \textit{e.g.}, $\bm{\mu}_k$, we say that $\bm{\mu}_k$ has differential privacy if the state variable computed on $\tilde{\mathcal{D}}$ is not distinguishable from that computed on $\mathcal{D}$ under a data release mechanism. Here the \textit{mechanism} means an appropriately defined data sanitization method, which is to add random noise in this case. 

Formally, we define differential privacy as follows:
\begin{definition}[R\'enyi differential privacy~\cite{mironov2017renyi}]
A mechanism $f: \mathcal{D} \rightarrow \bm{\eta}$ is said to have $(1,\epsilon)$-R\'enyi differential privacy if there exists a constant $\epsilon$ for any adjacent datasets $\mathcal{D}, \tilde{\mathcal{D}}$ satisfying 
\begin{align}\label{eq:Renyi-KL}
\mathrm{KL}[f \mid \mathcal{D},\tilde{\mathcal{D}}] \triangleq 
\int\!\mathrm{d}\bm{\eta}\; f(\bm{\eta}\mid \mathcal{D})
\ln \frac{f(\bm{\eta} \mid \mathcal{D})}{f(\bm{\eta} \mid \tilde{\mathcal{D}})}
 \leq \epsilon.
\end{align}
\end{definition}
In words, the mechanism is differentially private if the adjacent datasets are not distinguishable in terms of the Kullback-Leibler (KL) divergence.

This definition is a special case of the more general $(\alpha,\epsilon)$-R\'enyi differential privacy~\cite{mironov2017renyi}, which also includes the original definition of $\epsilon$-differential privacy~\cite{dwork2014algorithmic} as $\alpha \rightarrow \infty$. A major motivation for this extension is that the $\epsilon$-differential privacy cannot properly handle the Gaussian mechanism and thus is not generally appropriate for real-valued noisy data.

In our Bayesian learning framework, the pattern center parameters $\{\bm{\mu}_k\}$ are potentially the most vulnerable quantity because their posterior mean is represented as a linear combination of the raw samples. One reasonable choice for the data release mechanism is the posterior distribution. While we MAP-estimated $\bmmu_k$ in Sec.~\ref{subsec:CollaboDict_derivation_GGM_mixture}, it is also possible to find the posterior distribution, denoted by $q(\bmmu_k \mid \sfLambda_k)$. As derived in Appendix~\ref{sec:mu_k_posterior}, we have
\begin{align}\label{eq:mu_posterior_with_w}
   q(\bmmu_k \mid \sfLambda_k)=  \mathcal{N}(\bm{\mu}_k \mid \bmw_k,(\lambda_k\mathsf{\Lambda}_k)^{-1}), \quad 
   \bmw_k  \triangleq \frac{1}{\lambda_0 + \Bar{N}_k}(\lambda_0{\bm m}_0 + \Bar{N}_k\bar{\bmm}_k), \quad \lambda_k \triangleq \lambda_0 + \Bar{N}_k.
\end{align}
Notice that the posterior mean $\bmw_k$ is the same as the MAP estimate in Eq.~\eqref{eq:mu-posterior}. Let $\tilde{\bmw}_k$ be the corresponding posterior mean in the perturbed dataset $\Tilde{\calD}$. Assuming the other parameters $\{ \lambda_k, \mathsf{\Lambda}_k \}$ are fixed, we have 
\begin{align} \nonumber
\int\!\mathrm{d}\bm{\eta}\;& \mathcal{N}(\bm{\eta}\mid \bmw_k,(\lambda_k\mathsf{\Lambda}_k)^{-1})
\ln \frac{\mathcal{N}(\bm{\eta}\mid \bm{w}_k,(\lambda_k\mathsf{\Lambda}_k)^{-1})}{\mathcal{N}(\bm{\eta}\mid \tilde{\bm{w}}_k,(\lambda_k\mathsf{\Lambda}_k)^{-1})}
 \\ \nonumber 
 &=\frac{1}{2}\lambda_k (\bm{w}_k - \tilde{\bm{w}}_k)^\top
 \mathsf{\Lambda}_k
 (\bm{w}_k - \tilde{\bm{w}}_k),
 \\ \label{eq:KL_k}
&=\frac{1}{2}\lambda_k \left(
\frac{r^{s(\tilde{n})}_k}{\lambda_k}\right)^2
(\bm{x}^{s(\tilde{n})} - \tilde{\bm{x}}^{s(\tilde{n})})^\top \mathsf{\Lambda}_k
(\bm{x}^{s(\tilde{n})} - \tilde{\bm{x}}^{s(\tilde{n})})
\\ \label{eq:KL_k_bound1}
&\leq 
\frac{1}{2\lambda_k}\max_{j}\{r^{s(j)}_k\}^2 \times
\|\mathsf{\Lambda}_k \|_2 \times
\| \bm{x}^{s(\tilde{n})} - \tilde{\bm{x}}^{s(\tilde{n})}\|^2_2
\end{align}
where $\| \cdot \|_2$ denotes the $\ell_2$ norm. The matrix $\ell_2$ norm is the same as the maximum singular value of the matrix.

Here let us assume that the pairwise distance between samples in the dataset is upper-bounded by $R <\infty$. This can be easily done by removing obvious outliers in the preprocess. As discussed previously, our Bayesian learning algorithm allows for automatically removing irrelevant mixture components during the iteration. Hence, we can further assume that we only retain sufficiently large $\bar{N}_k$ after the \texttt{local\_update} procedure in Algorithm \ref{algo:CoDiBlock},~\textit{i.e.}, $\bar{N}_k > \delta > 0$.

\begin{theorem}
    \label{prop:differentialPrivacy}
We assume that any pairwise $\ell_2$ distance between the samples is upper-bounded by $R < \infty$, and the components whose $\bar{N}_k < \delta$ are discarded after the \texttt{local\_update} procedure in Algorithm~\ref{algo:CoDiBlock}, and $\lambda_0 >0$. Then the release mechanism using the posterior distribution has $(1,\epsilon)$-R\'enyi differential privacy for some constant $\epsilon$.
\end{theorem}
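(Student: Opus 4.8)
The plan is to pick up from the inequality already established in Eq.~\eqref{eq:KL_k_bound1} and to show that each of its three factors is bounded by a finite constant that is uniform over the perturbed index $\tilde n$ and over all adjacent pairs $\calD,\tilde\calD$. Conditional on $\{\sfLambda_l\}$ and the responsibilities (held fixed under the single-sample perturbation, as in the derivation leading to Eqs.~\eqref{eq:KL_k}--\eqref{eq:KL_k_bound1}), the posteriors $q(\bmmu_k\mid\sfLambda_k)$ are independent across $k$, so the KL divergence in the criterion \eqref{eq:Renyi-KL} of the joint release $\{\bmmu_k\}$ decomposes as $\sum_k\mathrm{KL}_k$, each term obeying the bound in Eq.~\eqref{eq:KL_k_bound1}; since at most $K$ components survive the $\bar N_k>\delta$ filter, it suffices to bound one such term and multiply by $K$.

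Two of the three factors are immediate. The responsibilities are categorical posterior probabilities, so $r^{s(j)}_k\in[0,1]$ (cf.\ Eq.~\eqref{eq:r_sk_posterior}) and $\max_j\{r^{s(j)}_k\}^2\le 1$. For the prefactor, $\lambda_k=\lambda_0+\bar N_k$; because only components with $\bar N_k>\delta$ are released and $\lambda_0>0$, we have $\lambda_k>\lambda_0+\delta$ and hence $1/\lambda_k<1/(\lambda_0+\delta)$. For the perturbation term, the bounded-diameter hypothesis applied to $\calD\cup\tilde\calD$ gives $\|\bmx^{s(\tilde n)}-\tilde\bmx^{s(\tilde n)}\|_2^2\le R^2$ (with a harmless factor of $4$ if one only posits diameter $R$ for each of $\calD,\tilde\calD$ separately and routes a triangle inequality through a common sample).

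The one substantive step is to bound $\|\sfLambda_k\|_2$ uniformly, which I would do directly from the defining program~\eqref{eq:Lamk}; write $J(\cdot)$ for its objective and $\lambda_{\max}=\|\sfLambda_k\|_2$. Evaluating at the feasible point $\sfI_M$ gives $J(\sfLambda_k)\ge J(\sfI_M)=-\trace(\sfSigma_k)-\tfrac{\rho}{\bar N_k}M$, which is finite and bounded below uniformly: the bounded-diameter hypothesis makes each entry of $\sfSigma_k$ an $\{r^{s(n)}_k\}$-weighted average of coordinate products of samples confined to a ball of radius $O(R)$, so $\trace(\sfSigma_k)$ is controlled, and $\bar N_k>\delta$ controls $\rho/\bar N_k$. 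On the other side, $\sfLambda_k$ is PSD, so $\ln|\sfLambda_k|\le M\ln\lambda_{\max}$, $\trace(\sfLambda_k\sfSigma_k)\ge 0$, and $\|\sfLambda_k\|_1\ge\max_i(\sfLambda_k)_{ii}\ge\tfrac1M\trace(\sfLambda_k)\ge\tfrac1M\lambda_{\max}$, whence $J(\sfLambda_k)\le\tfrac{\bar N_k+1}{\bar N_k}M\ln\lambda_{\max}-\tfrac{\rho}{M\bar N_k}\lambda_{\max}$. Using $\bar N_k\in(\delta,N_{\mathrm{tot}}]$ with $N_{\mathrm{tot}}$ the finite total sample count, the $\ln\lambda_{\max}$ coefficient is $\le(1+1/\delta)M$ and the linear coefficient is $\ge\rho/(MN_{\mathrm{tot}})>0$; since $c_1\ln\lambda_{\max}-c_2\lambda_{\max}\to-\infty$ for $c_1,c_2>0$, the two bounds together force $\lambda_{\max}\le B_\Lambda$ for a finite $B_\Lambda$ depending only on $R,\delta,\rho,M,N_{\mathrm{tot}},\lambda_0,\bmm_0$.

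Assembling these bounds in Eq.~\eqref{eq:KL_k_bound1} and summing over the at most $K$ retained components yields $\mathrm{KL}[f\mid\calD,\tilde\calD]\le K\,\tfrac{R^2B_\Lambda}{2(\lambda_0+\delta)}=:\epsilon<\infty$, uniformly over adjacent datasets, which is precisely the $(1,\epsilon)$-R\'enyi differential privacy criterion \eqref{eq:Renyi-KL}. I expect the $\|\sfLambda_k\|_2$ bound to be the main obstacle: it is the only ingredient not directly supplied by the stated hypotheses, and the coercivity argument above needs $\rho>0$ --- intrinsic to the sparse GGM prior in Eq.~\eqref{eq:Gauss-Laplace}, and genuinely necessary since Eq.~\eqref{eq:Lamk} can be unbounded above when $\sfSigma_k$ is singular. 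Equivalently one could assume $\sfSigma_k\succ0$, in which case $-\trace(\sfLambda_k\sfSigma_k)\le-\lambda_{\min}(\sfSigma_k)\lambda_{\max}$ supplies the coercive linear term and $\rho$ can be dropped; I would state one of these explicitly.
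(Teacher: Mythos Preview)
Your proposal is correct and follows the same high-level strategy as the paper: start from Eq.~\eqref{eq:KL_k_bound1}, bound each factor separately, and invoke a coercivity argument on the graphical-lasso objective~\eqref{eq:Lamk} to control $\|\sfLambda_k\|_2$. The paper uses the slightly looser $1/\lambda_k<1/\lambda_0$ (discarding the $\delta$ contribution you keep), but otherwise the easy factors are handled identically, and the final constant is $\epsilon=KBR^2/(2\lambda_0)$.

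The one genuine difference is in how $\|\sfLambda_k\|_2\le B$ is obtained. You extract the coercive linear term directly from $\|\sfLambda_k\|_1\ge\tfrac{1}{M}\lambda_{\max}$ (in fact $\|\sfLambda_k\|_1\ge\trace(\sfLambda_k)\ge\lambda_{\max}$, which saves the $1/M$), so the comparison $J(\sfI_M)\le J(\sfLambda_k)\le c_1\ln\lambda_{\max}-c_2\lambda_{\max}$ closes immediately. The paper's Appendix~\ref{sec:upperbound} takes a longer route: it first bounds $\|\sfLambda_k\|_1$ and $\trace(\sfLambda_k\sfSigma_k)$ separately in terms of $\ln\|\sfLambda_k\|_2$, then introduces an auxiliary constant $h=\max_{i,j}|[\sfSigma_k-\sfI_M]_{i,j}|$ and uses $\trace(\sfLambda_k\sfSigma_k)\ge\trace(\sfLambda_k)-h\|\sfLambda_k\|_1$ to convert back to $\|\sfLambda_k\|_2$. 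Your route is more elementary and avoids the extra appeal to boundedness of $h$; the paper's detour yields no apparent advantage beyond a different implicit constant. Your closing remark that $\rho>0$ (or alternatively $\sfSigma_k\succ0$) is genuinely needed for coercivity is correct and worth stating explicitly; the paper relies on $\rho>0$ throughout without flagging it.
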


\begin{proof}
We need to bound the right hand-side of Eq.~\eqref{eq:KL_k_bound1} by a constant. From the definition of $\lambda_k$, it is plain to see that $1/\lambda_k < 1/\lambda_0$. Note that $r_k^{s(j)} < 1$ for all $s$ and $j$. It implies that
\begin{equation}
\frac{1}{2\lambda_k}\max_{j}\{r^{s(j)}_k\}^2 \|
\bm{x}^{s(\tilde{n})} - \tilde{\bm{x}}^{s(\tilde{n})}\|^2_2 \le
\frac{R^2}{2\lambda_0}. \label{eq:boundR}
\end{equation}

As shown in Appendix~\ref{sec:upperbound}, there exists an upper-bound in $\|{\sf \Lambda}_{k}\|_2$. Letting $B$ the upper-bound, we have 
\begin{align}
\mathrm{KL}[f \mid \mathcal{D},\tilde{\mathcal{D}}]  \leq
\frac{KBR^2}{2\lambda_0} \triangleq \epsilon,
\end{align}
which completes the proof.
\end{proof}

This theorem guarantees the differential privacy of the CollabDict protocol when publishing the global state variables to external parties.

\begin{figure}[tb]
\begin{center}
\includegraphics[trim={2.5cm 4cm 1cm 4cm},clip,width=12cm]{./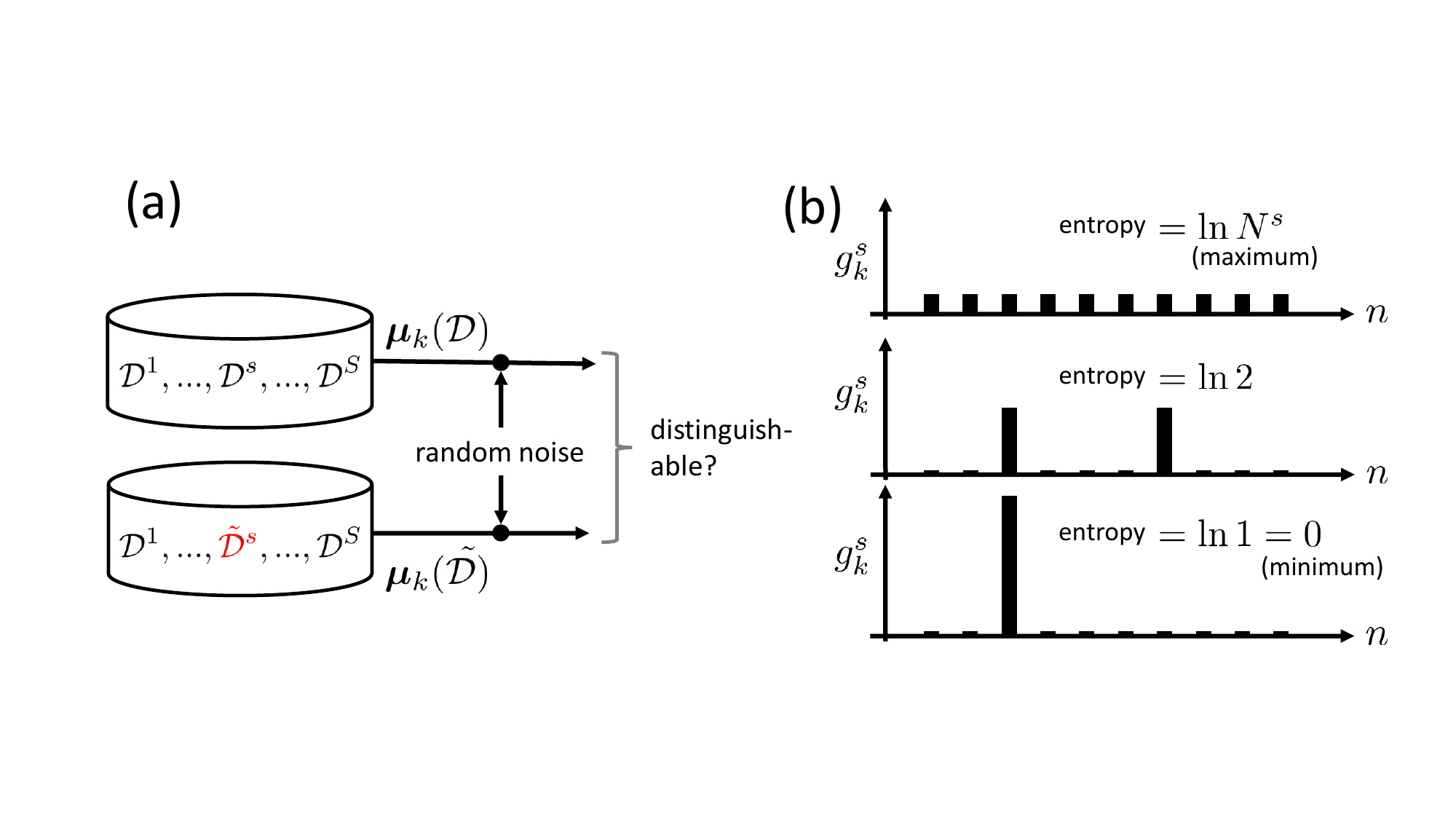}
\end{center}
\caption{Illustration of (a) differential privacy and (b) the entropy $\ell$-diversity. In the latter, the uniform distribution gives the maximum entropy.  }
\label{fig:privacy}
\end{figure}

\subsection{Entropy $\ell$-diversity in CollabDict}

Another aspect of privacy that we are concerned with is privacy preservation in consensus building. In this case, the most vulnerable quantity is potentially $\bmm^s_k$ because it is a linear combination of the raw samples (see Eq.~\eqref{eq:N_sk}). The question is whether the data release mechanism for $\bmm^s_k$ poses any risk of reverse-engineering any of the samples in $\mathcal{D}^s$ through $\bmm^s_k$.

To address this problem, let us define a probability distribution over $n \in \{1,\ldots,N^s\}$:
\begin{align}
g^s_k(n) = \frac{ 
\mathcal{N}(\bm{x}^{s(n)} | \bm{m}_k,(\mathsf{\Lambda}_k)^{-1})
}{
\sum_{m=1}^{N^s}
\mathcal{N}(\bm{x}^{s(m)} | \bm{m}_k,(\mathsf{\Lambda}_k)^{-1})
},
\end{align}
which is a normalized version of the sample weight $r^{s(n)}_k$ in Eq.~\eqref{eq:r_sk_posterior}. If, for example, only a single $n$ dominates the distribution, $\bmm^s_k$ will be a faithful surrogate of the raw sample $\bm{x}^{s(n)}$. Therefore, this sample is exposed to a peer consensus node by sharing $\bmm^s_k$.

The notion of \textit{entropy $\ell$-diversity} is useful  for qualitatively evaluating such a risk:
\begin{definition}[entropy $\ell$-diversity~\cite{Machanavajjhala07ACMtrans}]
 A release mechanism with a probability distribution over database entries is said to have the entropy $\ell$-diversity is the entropy of the distribution is lower-bounded with $\ln \ell$.
\end{definition}
As illustrated in Fig.~\ref{fig:privacy}~(b), entropy is a useful measure of the non-identifiability of the individual samples. The value $\ell$ provides an intuitive measure of how many elements are outstanding in the distribution. 

Therefore, if privacy preservation among the consensus nodes is a significant concern, the participants should monitor the value
\begin{align}
E^s \triangleq  \max_{k} \left\{
- \sum_{n=1}^{N^s} g^s_k(n) \ln g^s_k(n)
\right\}
\end{align}
to determine if it is greater than a threshold $\ln l_0$. If $E^s < \ln l_0$ holds during consensus building, the participant should consider adding random noise to the samples. To avoid introducing unwanted bias to the model, the participant can use the posterior $\mathcal{N}(\cdot \mid \bmw_k, (\lambda_k\mathsf{\Lambda}_k)^{-1})$ for the noise, ensuring privacy preservation as guaranteed by Theorem~\ref{prop:differentialPrivacy}.

\section{Conclusion}

In this paper, we have presented two methodological advancements in decentralized multi-task learning under privacy constraints, aiming to pave the way for future developments in next-generation Blockchain platforms. First, we extended the collaborative dictionary learning (CollabDict) framework, which has been limited to GGM mixture models so far, to incorporate variational autoencoders (VAEs) for enhanced anomaly detection. While VAEs have been widely applied to anomaly detection problems, to the best of our knowledge, this is the first framework satisfying both decentralization and data privacy constraints. 

Furthermore, we proposed a theoretical framework to analyze the risk of data privacy leakage when models trained with the CollabDict framework are shared. Given the widespread application of ``pre-trained models'' in large-scale deep learning, analyzing external data privacy breach is of significant practical importance. We showed that the CollabDict approach, when applied to GGM mixtures, meets $(1,\epsilon)$-R\'enyi differential privacy criterion. We also proposed a practical metric for monitoring internal privacy breaches during the learning process based on the notion of the entropy $\ell$-diversity. 

One promising avenue for future research involves conducting instability analyses of VAE-based anomaly detection methods. In particular, addressing the issue of posterior collapse in VAEs, despite recent advancements, remains a significant challenge. Further research is needed both theoretically and empirically.


\appendix 
\section{Posterior distribution of mean parameter} \label{sec:mu_k_posterior}

This section provides the derivation of the posterior distribution of $\bmmu_k$ discussed in Sec.~\ref{subsec:Differential_privacy_collabdict}.

In the GGM mixture model, we have assigned prior distributions to $\{(\bmmu_k,\sfLambda_k)\}$ and $\bmz^{s(n)}$, and therefore, we can find the posterior distributions for these model parameters using Bayes' rule. We assume the following form for the posterior distributions:
\begin{align}
    Q(\bmmu, \sfLambda,\sfZ) = \prod_{k=1}^K q(\bmmu_k\mid \sfLambda_k) q(\sfLambda_k) \prod_{s=1}^S\prod_{n\in \calD^s} q(\bmz^{s(n)}),
\end{align}
where $\sfZ$ is a collective notation for $\{\bmz^{s(n)}\}$. For $q(\bmz^{s(n)})$, we have provided the posterior as
\begin{align}\label{eq:posterior_for_Z}
    q(\bmz^{s(n)}) = \prod_{k=1}^K \left(r^{s(n)}_k\right)^{z^{s(n)}_k}
\end{align}
in Sec.~\ref{subsec:CollaboDict_derivation_GGM_mixture}. Let us find the posterior distribution for $\bmmu_k$ under the assumption that $\sfLambda$ is point-estimated. Following the variational Bayes framework~\cite{Bishop}, we seek $q(\bmmu_k\mid \sfLambda_k)$ in the form:
\begin{align}
    q(\bmmu_k\mid \sfLambda_k) \propto \left\langle 
    p(\bmmu,\sfLambda)\prod_{s=1}^S\prod_{n\in \calD^s} p(\bmx^{s(n)}\mid \bmz^{s(n)}, \bmmu,\sfLambda)p(\bmz^{s(n)}\mid \bmpi^s)
    \right\rangle_{\sfZ},
\end{align}
where $\langle \cdot \rangle_{\sfZ}$ denotes the expectation with respect to $\prod_{s,n}q(\bmz^{s(n)})$ in Eq.~\eqref{eq:posterior_for_Z}. By taking the logarithm on both sides and selecting the terms that depend on $\bmmu_k$, we have 
\begin{align}\label{eq:ln_q_is}
    \ln q(\bmmu_k\mid \sfLambda_k)
    &=\rmc. -\frac{1}{2}\trace\left( \sfLambda_k \sfQ_k\right),
\end{align}
where $\rmc.$ is a constant and
\begin{gather}
    \sfQ_k \triangleq \Bar{N}_k\Bar{\sfC}_k -\lambda_k\bmw_k\bmw_k^\top - \lambda_0\bmm_0\bmm_0^\top + \lambda_k(\bmmu_k - \bmw_k)(\bmmu_k - \bmw_k)^\top, \\
    \lambda_k \triangleq \lambda_0 + \Bar{N}_k, \quad 
    \bmw_k  \triangleq \frac{1}{\lambda_0 + \Bar{N}_k}(\lambda_0{\bm m}_0 + \Bar{N}_k\bar{\bmm}_k).
\end{gather}
Here, $\bar{N}_k, \bar{\sfC}_K, \bar{\bmm}_k$ have been defined in Eq.~\eqref{eq:barN_barm_barC}. Since $\ln q(\bmmu_k\mid \sfLambda_k)$ in Eq.~\eqref{eq:ln_q_is} is a quadratic form in $\bmmu_k$, we have
\begin{align}
     q(\bmmu_k\mid \sfLambda_k) = \calN\left(\bmmu_k \mid \bmw_k, (\lambda_k\sfLambda_k)^{-1}\right).
\end{align}
This is the posterior distribution in Eq.~\eqref{eq:mu_posterior_with_w} in Sec.~\ref{subsec:Differential_privacy_collabdict}.

\section{Upper-boundedness of precision matrix norm} \label{sec:upperbound}

This section proves the following theorem, which was used in the proof of Theorem~\ref{prop:differentialPrivacy}:

\begin{theorem}\label{th:upperbound}
The $\ell_2$ norm of $\mathsf{\Lambda}_k$ as the solution of Eq.~\eqref{eq:Lamk} is upper-bounded.
\end{theorem}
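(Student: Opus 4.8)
The plan is to establish a uniform upper bound on $\|\sfLambda_k\|_2$ by exploiting the optimality conditions of the convex program in Eq.~\eqref{eq:Lamk}. Recall that $\sfLambda_k$ maximizes
$g(\sfLambda_k) \triangleq \frac{\bar N_k+1}{\bar N_k}\ln|\sfLambda_k| - \trace(\sfLambda_k\sfSigma_k) - \frac{\rho}{\bar N_k}\|\sfLambda_k\|_1$
over the cone of symmetric positive-definite matrices. First I would record the key structural facts available from the main text: the matrix $\sfLambda_k$ is guaranteed positive-definite at the optimum (otherwise $\ln|\sfLambda_k|=-\infty$), and, because we retain only components with $\bar N_k>\delta>0$, the coefficient $\frac{\bar N_k+1}{\bar N_k}$ lies in the bounded range $(1, 1+1/\delta)$. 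The sample-boundedness assumption ($\ell_2$ pairwise distance $\le R$) translates, via the definition of $\bar{\sfC}_k$ and $\bar{\bmm}_k$ in Eq.~\eqref{eq:barN_barm_barC} together with the formula for $\sfSigma_k$ in Eq.~\eqref{eq:mu-posterior}, into an upper bound on $\|\sfSigma_k\|_2$ by a constant depending only on $R$, $\lambda_0$, and $\bmm_0$; I would make this quantitative as a preliminary lemma.

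The main step is the first-order stationarity condition. Differentiating $g$, optimality gives
$\frac{\bar N_k+1}{\bar N_k}\sfLambda_k^{-1} = \sfSigma_k + \frac{\rho}{\bar N_k}\sfG_k$,
where $\sfG_k$ is a subgradient of $\|\cdot\|_1$, so each entry of $\sfG_k$ lies in $[-1,1]$ and hence $\|\sfG_k\|_2 \le \|\sfG_k\|_F \le M$. Taking $\ell_2$ (operator) norms and using the triangle inequality yields
$\frac{\bar N_k+1}{\bar N_k}\|\sfLambda_k^{-1}\|_2 \le \|\sfSigma_k\|_2 + \frac{\rho M}{\bar N_k} \le \|\sfSigma_k\|_2 + \frac{\rho M}{\delta}$.
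Since $\frac{\bar N_k+1}{\bar N_k} > 1$, this gives an upper bound $\beta$ on $\|\sfLambda_k^{-1}\|_2$, i.e. a \emph{lower} bound $1/\beta$ on the smallest eigenvalue of $\sfLambda_k$. That controls $\sfLambda_k$ from below but not from above, so a second argument is needed for the largest eigenvalue.

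For the upper bound on $\|\sfLambda_k\|_2$ itself I would argue by comparing the optimal value to a feasible reference point. Evaluating $g$ at $\sfLambda_k = \epsilon_0 \sfI$ for a small fixed $\epsilon_0>0$ gives a finite lower bound $g(\sfLambda_k)\ge c_0$ with $c_0$ depending only on $M,\delta,\epsilon_0,\|\sfSigma_k\|_2,\rho$. On the other hand, at the optimum, writing $\sfLambda_k$ in terms of its eigenvalues $\lambda^{(1)}\ge\cdots\ge\lambda^{(M)}$, the term $-\trace(\sfLambda_k\sfSigma_k)$ is $\le 0$ (as $\sfSigma_k\succeq 0$; one should check $\sfSigma_k$ is PSD, which follows from its interpretation as a scatter-type matrix plus a PSD correction), $-\frac{\rho}{\bar N_k}\|\sfLambda_k\|_1 \le 0$, and $\ln|\sfLambda_k| = \sum_i \ln\lambda^{(i)} \le \ln\lambda^{(1)} + (M-1)\ln\beta$ using the eigenvalue lower bound just obtained. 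Combining, $c_0 \le g(\sfLambda_k) \le \frac{\bar N_k+1}{\bar N_k}\bigl(\ln\lambda^{(1)} + (M-1)\ln\beta\bigr)$, which rearranges to $\ln\lambda^{(1)} \ge$ something, the wrong direction — so instead I would keep the $-\trace(\sfLambda_k\sfSigma_k)$ term: if $\sfSigma_k\succ 0$ with smallest eigenvalue $\sigma_{\min}>0$, then $\trace(\sfLambda_k\sfSigma_k)\ge \sigma_{\min}\lambda^{(1)}$, giving $c_0 \le C_1 + C_2\ln\lambda^{(1)} - \sigma_{\min}\lambda^{(1)}$, and since $t\mapsto C_2\ln t - \sigma_{\min}t\to-\infty$ this forces $\lambda^{(1)}=\|\sfLambda_k\|_2$ to be bounded by a constant $B$ depending only on $R,\lambda_0,\bmm_0,\rho,M,\delta$. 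The main obstacle is precisely this last point: one must verify $\sfSigma_k$ is not merely PSD but has a strictly positive smallest eigenvalue, uniformly bounded away from zero — this needs the $\lambda_0>0$ assumption and possibly a mild nondegeneracy condition on the data (or one falls back on the $\ell_1$ term, using that $\|\sfLambda_k\|_1 \ge \|\sfLambda_k\|_2$ up to dimensional factors, to get coercivity from $-\frac{\rho}{\bar N_k}\|\sfLambda_k\|_1$ instead). I would present the $\ell_1$-based coercivity route as the primary argument since it avoids any assumption on $\sfSigma_k$ beyond PSD: from $c_0 \le \frac{\bar N_k+1}{\bar N_k}\ln|\sfLambda_k| - \frac{\rho}{\bar N_k}\|\sfLambda_k\|_1$ and $\|\sfLambda_k\|_1\ge\|\sfLambda_k\|_2\ge\lambda^{(1)}$, $\ln|\sfLambda_k|\le M\ln\lambda^{(1)}$, we get $c_0 \le (1+1/\delta)M\ln\lambda^{(1)} - \frac{\rho}{\bar N_k}\lambda^{(1)}$; but $\bar N_k$ can be large, weakening the negative term, so I would bound $\bar N_k \le S\cdot(\text{max samples})$ to keep $\rho/\bar N_k$ bounded below, and then coercivity of $a\ln t - bt$ closes the argument, yielding the desired constant $B$.
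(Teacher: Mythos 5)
Your primary argument (compare the optimal objective value with that at a scalar multiple of the identity, drop the nonnegative $\trace(\sfLambda_k\sfSigma_k)$ term, bound $\ln|\sfLambda_k|\le M\ln\|\sfLambda_k\|_2$, lower-bound $\rho/\bar N_k$ by $\rho/(S\max_s N^s)$, and invoke coercivity of $t\mapsto a\ln t - ct$) is correct and is essentially the paper's proof. The only real difference is that you relate the spectral norm to the penalty directly via $\|\sfLambda_k\|_2\le\|\sfLambda_k\|_F\le\|\sfLambda_k\|_1$, which actually streamlines the paper's detour through $\trace(\sfLambda_k)\le\trace(\sfLambda_k\sfSigma_k)+h\|\sfLambda_k\|_1$; the exploratory KKT and $\sigma_{\min}(\sfSigma_k)$ routes you correctly discard are not needed.
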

\begin{proof}

Let us denote the objective function in Eq.~\eqref{eq:Lamk} as $-f(\sfLambda_k)$:
\begin{align}
    f(\sfLambda_k) \triangleq
-b_k \ln|{\sf \Lambda}_k|
+ \mathrm{Tr}({\sf \Lambda}_k \sfSigma_k) +\frac{\rho}{\bar{N}_k}\|{\sf \Lambda}_k \|_1,
\end{align}
where we have defined $b_k \triangleq  \frac{\bar{N}_k+1}{\bar{N}_k}$. Here, we note that $b_k \ln|{\sf \Lambda}_{k}| \le M b_k \ln \|{\sf \Lambda}_{k}\|_2$ holds since $\|{\sf \Lambda}_{k}\|_2$ is the largest eigenvalue of ${\sf \Lambda}_{k}$, which is positive semi-definite. In Theorem~\ref{prop:differentialPrivacy}, we assumed $^\exists  \delta, \ \bar{N}_k > \delta >0$. Hence,
\begin{align} 
Mb_k 
&= M\left(1 + \frac{1}{\bar{N}_k}\right) < M\left(1 + \frac{1}{\delta}\right)  \triangleq b,
\\
\frac{\rho}{\bar{N}_k} 
&\ge \frac{\rho}{S \max_sN^s_k} \ge \frac{\rho}{S \max_sN^s} \triangleq c.
\end{align}
Since both $\sfLambda_k$ and $\sfSigma_k$ are positive semi-definite, $\mathrm{Tr}(\sfLambda_k\sfSigma_k) \ge 0$ holds.  Then, we have
\begin{align} \nonumber
f(\mathsf{I}_M) \ge f({\sf \Lambda}_{k}) 
& \ge -b_k \ln|{\sf \Lambda}_{k}| + \frac{\rho}{N_k}\|{\sf \Lambda}_{k}\|_1 
\ge  -b \ln \|{\sf \Lambda}_{k}\|_2 + c\|{\sf \Lambda}_{k}\|_1,
\end{align}
where $\mathsf{I}_M$ is the $M$-dimensional identity matrix. Hence,
\begin{equation}
\|{\sf \Lambda}_{k}\|_1 \le \frac{f(\mathsf{I}_M)}{c} + \frac{b}{c} \ln
\|{\sf \Lambda}_{k}\|_2. \label{eq:1norm}
\end{equation}
Similarly, we have
\begin{align} \label{eq:2norm}
\mathrm{Tr}(\sfLambda_k\sfSigma_k) 
&\le f(\sfLambda_k) +b_k \ln|{\sf \Lambda}_{k}|
\le f(\mathsf{I}_M) +b_k \ln|{\sf \Lambda}_{k}|
\le f(\mathsf{I}_M) + b \ln \|{\sf \Lambda}_{k}\|_2.
\end{align}
Define $h = \max_{i,j}\left| \ [\sfSigma_k - \mathsf{I}_M]_{i,j}\right|$, where $[\cdot]_{i,j}$ is the operator selecting the $(i,j)$-element of the matrix. The pairwise bounded assumption implies that $h$ is bounded from above. Since
\begin{align*}
\trace(\sfLambda_k\sfSigma_k) = \trace(\sfLambda_k) + \trace\left(\sfLambda_k(\sfSigma_k - \sfI_M)\right) 
\ge \trace(\sfLambda_k) - h \|\sfLambda_k\|_1
\end{align*}
holds, we deduce that
\begin{equation}
\|\sfLambda_k\|_2 
\le \trace(\sfLambda_k) 
\le \trace(\sfLambda_k\sfSigma_k) + h \|\sfLambda_k\|_1.
\label{eq:2normb}
\end{equation}
Combining Eqs.~\eqref{eq:1norm}, \eqref{eq:2norm}, and~\eqref{eq:2normb}, we have 
\begin{equation}
g(\sfLambda_k)  
\triangleq \|\sfLambda_k\|_2 -
\left(1+\frac{d}{c}\right) b \ln \|\sfLambda_k\|_2 -
\left(1+\frac{d}{c}\right) f(\sfI_M)  \le 0 \label{eq:2normfinal}
\end{equation}
Since $g(\cdot)$ is a strictly convex and increases to infinity as
$t$ tends to infinity,  $\|{\sf \Lambda}_{k}\|_2$ is bounded by
the largest solution of $g(t) = 0$ for $t \ge (1+\frac{h}{c})b$. 
\end{proof}


\begin{small}
\paragraph{Disclaimer}
This paper was prepared for informational purposes by the Global Technology Applied Research center of JPMorgan Chase \& Co. This paper is not a product of the Research Department of JPMorgan Chase \& Co. or its affiliates. Neither JPMorgan Chase \& Co. nor any of its affiliates makes any explicit or implied representation or warranty and none of them accept any liability in connection with this paper, including, without limitation, with respect to the completeness, accuracy, or reliability of the information contained herein and the potential legal, compliance, tax, or accounting effects thereof. This document is not intended as investment research or investment advice, or as a recommendation, offer, or solicitation for the purchase or sale of any security, financial instrument, financial product or service, or to be used in any way for evaluating the merits of participating in any transaction.
\end{small}

\end{document}